\documentclass[letterpaper]{article} %
\usepackage[preprint]{aaai2027}  %
\usepackage{times}  %
\usepackage{helvet}  %
\usepackage{courier}  %
\usepackage[hyphens]{url}  %
\usepackage{graphicx} %
\usepackage{natbib}  %
\usepackage{caption} %
\usepackage{algorithm}
\usepackage{algorithmic}

\usepackage{booktabs,multirow,multicol}
\usepackage{graphicx}
\usepackage[table]{xcolor}
\usepackage{pifont}
\usepackage{bm}
\usepackage{amssymb}
\usepackage{amsthm,amsmath}
\newtheorem{theorem}{Theorem}
\newtheorem{manualtheoreminner}{Theorem}
\newenvironment{manualtheorem}[1]
  {\renewcommand\themanualtheoreminner{#1}%
   \manualtheoreminner}
  {\endmanualtheoreminner}

\theoremstyle{definition}
\newtheorem{definition}{Definition}

\newcommand{\vx}{\bm{x}}

\usepackage{newfloat}
\usepackage{listings}
\DeclareCaptionStyle{ruled}{labelfont=normalfont,labelsep=colon,strut=off} %
\floatstyle{ruled}
\newfloat{listing}{tb}{lst}{}
\floatname{listing}{Listing}
\title{Controllable Stochastic Quantization Encoding for \\ Adversarially Robust Spiking Neural Networks}
\author{
    Yujia Liu\textsuperscript{\rm 1,\rm 2},
    Peiyu Liu\textsuperscript{\rm 2, \rm 3},
    Yajing Zheng\textsuperscript{\rm 1,\rm 2},
    Tiejun Huang\textsuperscript{\rm 1,\rm 2}
}
\affiliations{
    \textsuperscript{\rm 1}NERCVT, School of Computer Science, Peking University, China\\
    \textsuperscript{\rm 2}State Key Laboratory for Multimedia Information Processing, School of Computer Science, Peking University, China
    \textsuperscript{\rm 3}Institute for Artificial Intelligence, Peking University, China
}

\usepackage{bibentry}

\begin{document}

\maketitle

\begin{abstract}
Spiking Neural Networks (SNNs) have attracted increasing attention due to their impressive temporal dynamics, energy efficiency, and brain-inspired mechanisms. Although SNNs have demonstrated promising performance in image classification tasks, recent studies have shown that they remain vulnerable to adversarial attacks, where imperceptible perturbations are added to input images to mislead model predictions. Existing defense methods mainly focus on training strategies, while the role of input encoding remains less explored.
An observation is that the robustness advantage of Poisson encoding over direct encoding may benefit from its inherent randomness. Motivated by this, we propose a stochastic quantization encoding method that encodes the input image with controllable randomness adjusted by the quantization scale, thereby improving the adversarial robustness of SNNs. We further show that this method constitutes a general framework that reduces to both Poisson encoding and direct encoding under different choices of the quantization scale. Since it enhances robustness at the input encoding stage, it can be combined with existing training-based defenses for further gains.
Experimental results on CIFAR-10 and CIFAR-100 demonstrate the effectiveness of the proposed stochastic quantization encoding method.
To sum up, this work highlights the importance of input encoding for the adversarial robustness of SNNs, providing a new perspective for understanding and improving it.
\end{abstract}

\section{Introduction}

Spiking Neural Networks (SNNs) are characterized by high energy efficiency and brain-inspired computing mechanisms, making them an attractive topic for neuromorphic computing~\citep{maas1997networks,bohte2000spikeprop,zheng2021going}. Different from Artificial Neural Networks (ANNs), which process information through continuous real-valued activations~\citep{2016_CVPR_He_ResNet}, SNNs employ discrete binary spike trains for information transmission~\citep{roy2019towards}. In addition, SNNs explicitly incorporate temporal dynamics, allowing them to effectively capture and process temporal information~\citep{kim2023exploring,han2025exploring}. Because of these advantages, SNNs have demonstrated remarkable performance in image classification tasks, with some models achieving results comparable to those of ANNs~\citep{huang2024towards,bu2025inference,zhou2026spikingformer}.

However, similar to ANNs, SNNs are also vulnerable to adversarial attacks~\citep{sharmin2019comprehensive,bu2023rate}, where carefully crafted imperceptible perturbations can lead the network to give incorrect predictions~\citep{2015_ICLR_Goodfellow_FGSM,2017_Chen_ZOO_AISec}. To address this issue, a variety of defense methods have been proposed for SNNs. Among them, adversarial training and its variants are the most widely adopted approaches, which enhance robustness by augmenting the training set with adversarial examples and solving a min-max optimization problem~\citep{aleksander2018towards}. Other methods aim to improve robustness through specifically designed regularization terms in the training loss, such as enforcing a bounded Lipschitz constant~\citep{ding2022snn} or introducing temporal constraints~\citep{leontev2021robustness}. More recently, it has been reported that promoting gradient sparsity can also contribute to improved adversarial robustness~\citep{liu2024enhancing,TGO}.

\begin{figure*}[!t]
    \centering
    \includegraphics[width=0.9\textwidth]{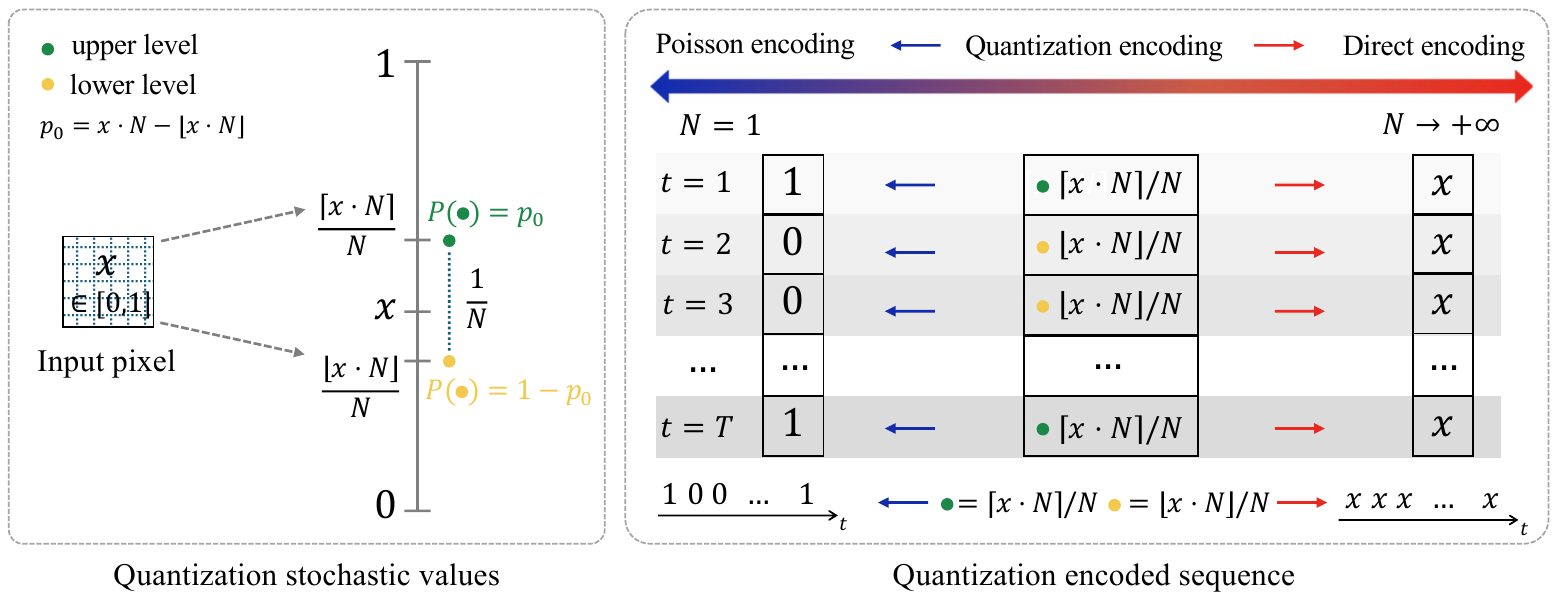}
    \caption{
    Illustration of the proposed stochastic quantization encoding method. At each time step, the normalized pixel value $x$ is randomly encoded as the upper quantiztaion level $\lceil xN\rceil/N$ with probability $p_0$, or as the lower quantiztaon level $\lfloor xN\rfloor/N$ with probability $1-p_0$, where $N$ is the quantization scale. At different time steps, the encoded values are independently sampled from the same distribution, and together form a quantization encoded sequence. When $N=1$, the proposed method reduces to Poisson encoding, where each encoded value is sampled from $\{0,1\}$. As $N$ increases, the encoded values become more stable and gradually approach direct encoding.
    }
    \label{fig:intro}
\end{figure*}

In addition to training strategies, input encoding methods also play a crucial role in the adversarial robustness of SNNs. Direct encoding and Poisson encoding are the most widely used methods in SNN training~\citep{wu2019direct,kim2022rate,fang2023spikingjelly}. For input images normalized to $[0,1]$, direct encoding feeds constant pixel intensities across all time steps, whereas Poisson encoding converts each pixel into stochastic spike trains where the pixel value serves as the firing probability. Recent studies have shown that Poisson encoding generally improves adversarial robustness compared with direct encoding, but at the cost of a noticeable drop in clean-image classification accuracy~\citep{sharmin2020inherent}. 
This phenomenon is closely related to the randomness introduced by Poisson encoding~\citep{RSC}. Stochastic spike generation may weaken the effect of adversarial perturbations, but it can also inject uncertainty into clean-image representations, leading to less stable predictions.

Inspired by this observation, we propose a stochastic quantization encoding method that can control the degree of randomness in input representations, thereby improving adversarial robustness while keeping the degradation in clean accuracy under control. As shown in Fig.~\ref{fig:intro}, each input pixel value $x$ is scaled by a quantization scale $N$ and then stochastically rounded to either $\lfloor xN \rfloor / N$ or $\lceil xN \rceil / N$ according to its fractional part. The quantization scale $N$ controls the upper bound of the encoding randomness, so different choices of $N$ achieve different trade-offs between adversarial robustness and clean accuracy. We further show that the proposed encoding provides a unified formulation of direct encoding and Poisson encoding under different choices of $N$.
Since the proposed method operates at the input encoding stage, it can be combined with existing training-based defense strategies to further enhance adversarial robustness.
The main contributions of this work are summarized as follows:

\begin{itemize}
\setlength{\itemsep}{0pt}
\setlength{\parsep}{0pt} 
\setlength{\parskip}{0pt}
    \item  We propose stochastic quantization encoding, an input encoding method for SNNs that introduces controllable randomness into input representations and thereby improves adversarial robustness.
    \item  We provide a theoretical analysis showing that the proposed quantization encoding is a general formulation that recovers both direct encoding and Poisson encoding as special cases under different choices of $N$.
    \item Experimental results on CIFAR-10 and CIFAR-100 validate the effectiveness of the proposed method and demonstrate its compatibility with existing training strategies for further robustness enhancement.
\end{itemize}

\section{Related Work}

\subsection{Adversarial Attacks on SNNs}
Adversarial attacks were first proposed in the context of ANNs. Recent studies have shown that SNNs are also susceptible to adversarial attacks~\citep{sharmin2020inherent,kang2026timestep}. Among existing attacks on SNNs, the Fast Gradient Sign Method (FGSM)~\citep{2015_ICLR_Goodfellow_FGSM}, Projected Gradient Descent (PGD)~\citep{aleksander2018towards}, and their variants are the most commonly used.

\textbf{FGSM} is a simple one-step attack method that generates adversarial examples by moving the input along the sign direction of the loss gradient, with the perturbation constrained by an $\ell_\infty$ norm bound~\citep{2015_ICLR_Goodfellow_FGSM}. \textbf{RFGSM} extends FGSM by applying a small random perturbation before the one-step gradient update, which helps move the input away from the non-smooth neighborhood of the original data point before linearizing the model loss~\citep{RFGSM}.

Different from the above attack methods, \textbf{PGD} is an iterative adversarial attack that generates adversarial examples by applying an FGSM-like update at each iteration, followed by a projection step that maps the adversarial examples back onto the $\epsilon$-neighborhood of the original image under the $\ell_\infty$ norm~\citep{aleksander2018towards}. 
\textbf{APGD} extends PGD by replacing the fixed step size with an adaptive step-size strategy~\citep{APGD}. Specifically, the step size is automatically adjusted according to the overall perturbation budget and the optimization progress.

\subsection{Defense Methods for SNNs}
To improve the robustness of SNNs against adversarial attacks, various defense methods have been proposed. 

A major line of research focuses on modifying the training process. Adversarial training~\citep{aleksander2018towards} and its variants~\citep{ding2022snn,wang2026boosting} are widely adopted in this direction. Typically, adversarial training~\citep{aleksander2018towards} improves robustness by solving a min-max optimization problem, in which adversarial examples are generated and incorporated into the training process. Building on this framework, \citet{ding2022snn} further regularize the Lipschitz constant of SNNs during adversarial training to enhance robustness. Other training-based defenses improve adversarial robustness by introducing carefully designed regularization terms into the loss function. For example, \citet{liu2024enhancing} propose a regularization term that promotes sparsity in the gradients of SNNs. The TGO method~\citep{TGO} introduces a membrane-potential constraint that pushes neurons' membrane potentials away from their firing thresholds, thereby reducing sensitivity to input perturbations and improve robustness.

Compared with methods that modify the training procedure, relatively few studies have explored SNN robustness from the perspective of input encoding. Poisson encoding~\citep{dayan2005theoretical,sharmin2020inherent} is a biologically inspired rate-based encoding scheme that converts input intensities into stochastic spike trains. Prior work has shown that Poisson encoding can achieve stronger robustness than direct encoding, though typically at the cost of a significant degradation in clean accuracy. Further analysis suggests that this accuracy--robustness trade-off is closely related to the inherent randomness of Poisson encoding~\citep{RSC}. Motivated by this observation, we propose a stochastic quantization encoding method to improve SNN robustness, where the upper bound of randomness can be controlled by adjusting the quantization scale.

\section{Preliminary}

\subsection{Neuron Dynamics in SNNs}

The most distinctive feature of neurons in SNNs, compared with those in ANNs, is that they incorporate temporal dynamics and transmit information through binary spikes. In this paper, we mainly consider the widely used Leaky Integrate-and-Fire (LIF) neuron model due to its efficiency and simplicity. The temporal dynamics of LIF neurons can be formulated as follows:
\begin{align}
\label{eq:LIF-model}
    u_i^l[t] &= \tau u_i^l[t-1](1-s_i^l[t-1]) + \sum_{j} w_{i,j}^{l-1} s_j^{l-1}[t], \\
\label{eq:Heaviside}
    s_i^l[t] & = H(u_i^l[t]-\theta).
\end{align}
Here, $u_i^l[t]$ denotes the membrane potential of the $i$-th neuron in layer $l$ at discrete time step $t$. As shown in Eq.(\ref{eq:LIF-model}), the membrane potential integrates both the decayed membrane potential from the previous time step and the weighted presynaptic spikes from the previous layer at the current time step. Specifically, the previous membrane potential $u_i^l[t-1]$ is scaled by the time constant $\tau$ and modulated by the reset term $(1-s_i^l[t-1])$, while the binary spikes $s_j^{l-1}[t]$ from the preceding layer are weighted by the synaptic weights $w_{i,j}^{l-1}$. According to Eq.(\ref{eq:Heaviside}), the output spike $s_i^l[t]$ is generated by the Heaviside step function $H(\cdot)$ when the membrane potential exceeds the firing threshold $\theta$.
It should be noted that layer $0$ denotes the input layer. The input image $\displaystyle \vx$ is encoded into a spike sequence, as will be introduced in the next subsection.

\subsection{Input Encoding Methods}

Direct encoding and Poisson encoding are two widely used input encoding methods in SNN training~\citep{fang2023spikingjelly}. In this paper, we assume pixel values of the input image $\displaystyle \vx$ are normalized to $[0,1]$, with each RGB value divided by $255$.

In direct encoding, the normalized pixel values are directly used as the input-layer activations at each time step:
\begin{equation}
    s_i^0[t] = x_i,
\end{equation}
where $x_i$ denotes the $i$-th normalized pixel value of $\displaystyle \vx$. Note that we slightly abuse notaion $s_i^0[t]$ here. Unlike the binary spikes $s_i^l[t]\in \{0,1\}$ for $l\geqslant 1$, the input spikes $s_i^0[t]$ in direct encoding take continuous values in $[0,1]$, as a deliberate relaxation at the input layer. In this way, the same input is provided to the network throughout all time steps.

For Poisson encoding, the normalized pixel value $x_i$ is treated as the firing probability, and the input spike $s_i^0[t]$ is independently sampled according to this probability at each time step, i.e.,
\begin{equation}
    P(s_i^0[t] = 1) = x_i, \qquad P(s_i^0[t] = 0) = 1 - x_i.
\end{equation}
Thus, each pixel value is represented by a stochastic binary spike train over time.

\section{Method}
In this paper, we propose a controllable Stochastic Quantization Encoding (SQE) method that encodes each input pixel into a sequence of random real values for SNN training. Specifically, we first describe how a single pixel is encoded into a stochastic quantization encoded sequence. We then define the randomness of an encoding method and show that the quantization scale controls the upper bound on the randomness of the proposed method. Furthermore, we demonstrate that the proposed method serves as a general encoding framework that connects Poisson encoding and direct encoding through different choices of the quantization scale. Finally, we provide a biological interpretation of the proposed method from the perspective of population coding.

\subsection{Stochastic Quantization Encoding Method}

The proposed SQE method introduces randomness through the quantization process, with the quantization scale controlling the quantization interval. 

To illustrate the encoding procedure, we take a single pixel as an example (Fig.~\ref{fig:intro}). Let $x_i\in [0,1]$ denote the normalized pixel value, and let $N\in \mathbb{Z}^+$ represent the quantization scale where $\mathbb{Z}^+$ is the set of positive integers. At each time step $t$, the proposed SQE method independently samples the encoded value $s_i^0[t]$ from the two adjacent quantization levels of $x_i$ when $x_iN \notin \mathbb{Z}$:
\begin{equation}
    s_i^0[t] \in \left\{
    \frac{\lfloor x_iN \rfloor}{N},
    \frac{\lceil x_iN \rceil}{N}
    \right\},
\end{equation}
where $\lfloor \cdot \rfloor$ and $\lceil \cdot \rceil$ denote the floor and ceiling operators, and $\frac{\lfloor x_iN \rfloor}{N}$ and $\frac{\lceil x_iN \rceil}{N}$ represent the lower and upper quantization levels, respectively.
The probabilities of selecting these two quantization levels are defined as
\begin{align}
    P\left(s_i^0[t]=\frac{\lceil x_iN \rceil}{N}\right) 
        &= x_iN - \lfloor x_iN \rfloor, \\
    P\left(s_i^0[t]=\frac{\lfloor x_iN \rfloor}{N}\right) 
        &= 1 - \left(x_iN - \lfloor x_iN \rfloor\right).
\end{align}

In other words, the probability of encoding $x_i$ into its upper quantization level is determined by the fractional part $\{x_iN\}$ of $x_iN$.
Repeating this independent sampling process over $T$ time steps produces a stochastic quantization encoded sequence $\{s_i^0[t]\}_{t=1}^{T}$.
Moreover, in the special case where $x_i N \in \mathbb{Z}$, we define $s_i^0[t] = x_i$ identically for all $t$.

\begin{theorem}[Unbiasedness]
\label{thr:unbias}
    The stochastic quantization encoding is unbiased at each time step,
    \begin{equation}
        \mathbb{E}\!\left[ s_i^0[t] \right] = x_i.
    \end{equation}
    and the time-averaged encoding converges to $x_i$ as $T\to \infty$, i.e.,
    \begin{equation}
        \frac{1}{T}\sum_{t=1}^{T} s_i^0[t] \xrightarrow{\;T \to \infty\;} x_i.
    \end{equation}
\end{theorem}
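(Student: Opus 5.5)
The plan is a direct two-part argument: first compute the expectation of a single encoded value, then apply the strong law of large numbers to the i.i.d.\ sequence $\{s_i^0[t]\}_{t=1}^T$. Throughout, fix the pixel value $x_i\in[0,1]$ and the quantization scale $N\in\mathbb{Z}^+$. Write $k=\lfloor x_iN\rfloor$ and $r=\{x_iN\}=x_iN-k\in[0,1)$. We split into the two cases used in the definition of the encoding: $x_iN\in\mathbb{Z}$ and $x_iN\notin\mathbb{Z}$.

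\textbf{Unbiasedness.} In the degenerate case $x_iN\in\mathbb{Z}$, the encoding sets $s_i^0[t]=x_i$ deterministically, so $\mathbb{E}[s_i^0[t]]=x_i$ holds trivially. Otherwise $\lceil x_iN\rceil=k+1$, and $s_i^0[t]$ is a two-point random variable taking the value $(k+1)/N$ with probability $r$ and $k/N$ with probability $1-r$. Hence
\begin{equation*}
\mathbb{E}\!\left[s_i^0[t]\right]=\frac{(k+1)r+k(1-r)}{N}=\frac{k+r}{N}=\frac{x_iN}{N}=x_i .
\end{equation*}
This computation is the entire first claim.

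\textbf{Convergence of the time average.} By construction, the values $s_i^0[1],\dots,s_i^0[T]$ are sampled independently from the same distribution, so they are i.i.d. They are also bounded in $[0,1]$, since both quantization levels lie in $[0,1]$ whenever $x_i\in[0,1]$. In particular they have finite mean $x_i$ and finite variance. The strong law of large numbers therefore gives $\frac{1}{T}\sum_{t=1}^T s_i^0[t]\to x_i$ almost surely as $T\to\infty$, and hence also in probability.

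To make the rate and the role of $N$ explicit, we can additionally compute
\begin{equation*}
\mathrm{Var}\!\left(s_i^0[t]\right)=\frac{r(1-r)}{N^2}\le\frac{1}{4N^2}.
\end{equation*}
Chebyshev's inequality then yields
\begin{equation*}
P\!\left(\left|\frac{1}{T}\sum_{t=1}^T s_i^0[t]-x_i\right|\ge\epsilon\right)\le\frac{1}{4N^2T\epsilon^2},
\end{equation*}
which is the weak-law version of the convergence statement.

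The only real obstacle is interpretive. The statement writes a plain arrow for a random sequence, so we must specify a mode of convergence. We will state the result as almost-sure convergence via the strong law, and note that convergence in probability and in $L^2$ follow from the variance bound above.
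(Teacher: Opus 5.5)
Your proposal is correct and follows the paper's proof essentially step for step: the same case split on whether $x_iN\in\mathbb{Z}$, the same two-point expectation computation, and an appeal to the law of large numbers for the i.i.d.\ encoded sequence. Your additions go slightly beyond the paper, which leaves the mode of convergence implicit, but they do not change the argument. They are the explicit choice of almost-sure convergence via the strong law, and the Chebyshev rate $1/(4N^2T\epsilon^2)$ from the variance $r(1-r)/N^2$, which the paper derives separately in its randomness theorem.
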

Therefore, the proposed SQE method preserves the input intensity in expectation. Due to the space limit, the proof of Theroem~\ref{thr:unbias} is shown in the Appendix A.

We further quantify the randomness of an encoding method by the variance of the encoded value at each time step. Since the encoding is sampled independently and identically across time steps, a larger variance indicates stronger temporal fluctuation of the encoded sequence, reflecting greater randomness in the encoding process.
\begin{definition} 
\label{def:randomness}
    The randomness of an encoding method at a single pixel $x_i$ is defined as
        \begin{equation}
         \operatorname{Var}\left[s^0_i[t]\right].
        \end{equation}
\end{definition}

For the proposed SQE method, the quantization scale $N$ determines the upper bound of the randomness for any pixel.
\begin{theorem}[Randomness]
\label{thr:random}
    For any pixel value $x_i\in[0,1]$, the randomness of the proposed stochastic quantization encoding method is bounded by
    \begin{equation}
        0 \leqslant \operatorname{Var}\left[s^0_i[t]\right] \leqslant \frac{1}{4N^2}.
    \end{equation}
\end{theorem}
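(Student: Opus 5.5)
The plan is to compute the variance of $s_i^0[t]$ exactly and then bound the resulting expression in the fractional part $p=\{x_iN\}=x_iN-\lfloor x_iN\rfloor\in[0,1)$. I will split into the two cases used in the definition of the encoding.

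\emph{Degenerate case} $x_iN\in\mathbb{Z}$. Here $s_i^0[t]=x_i$ deterministically, so $\operatorname{Var}[s_i^0[t]]=0$. This already satisfies both bounds.

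\emph{Generic case} $x_iN\notin\mathbb{Z}$. Here $\lceil x_iN\rceil=\lfloor x_iN\rfloor+1$, so I write
\begin{equation}
s_i^0[t]=\frac{\lfloor x_iN\rfloor}{N}+\frac{1}{N}B,
\end{equation}
where $B$ is a Bernoulli random variable with success probability $p$. This follows directly from the two selection probabilities in the definition of SQE. The additive constant does not affect variance, and $\operatorname{Var}[B]=p(1-p)$, so
\begin{equation}
\operatorname{Var}\left[s_i^0[t]\right]=\frac{p(1-p)}{N^2}.
\end{equation}
As a consistency check, the mean is $\lfloor x_iN\rfloor/N+p/N=x_i$, which agrees with Theorem~\ref{thr:unbias}. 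Alternatively, I could compute $\mathbb{E}[(s_i^0[t])^2]-x_i^2$ directly using that theorem, but the Bernoulli decomposition is cleaner.

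The final step bounds $p(1-p)$ on $[0,1]$. It is nonnegative, which gives the lower bound $0$. By AM--GM, or equivalently $p(1-p)=\tfrac14-(p-\tfrac12)^2$, it is at most $\tfrac14$, with equality at $p=\tfrac12$. This yields $\operatorname{Var}[s_i^0[t]]\le 1/(4N^2)$.

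There is no real obstacle in this argument. The only point needing care is to note explicitly that the two quantization levels differ by exactly $1/N$ in the non-integer case. I would also remark that the bound is attained whenever $\{x_iN\}=\tfrac12$, so the bound is tight.
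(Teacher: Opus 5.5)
Your proposal is correct and follows the paper's proof: the same split into the integer case (variance $0$) and the non-integer case, the same exact formula $\operatorname{Var}[s_i^0[t]]=p(1-p)/N^2$, and the same bound $p(1-p)\le 1/4$. The only difference is that you get the variance from the affine Bernoulli decomposition $s_i^0[t]=\lfloor x_iN\rfloor/N + B/N$, whereas the paper expands $\mathbb{E}[(s_i^0[t])^2]-x_i^2$ algebraically; your route is cleaner and reaches the same expression.
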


The proof of the theorem is shown in the Appendix B.
This theorem indicates that a larger $N$ leads to a tighter upper bound on the randomness, making the encoded sequence more stable. Consequently, increasing $N$ would improve clean accuracy by producing more stable encoded inputs. However, it may also weaken adversarial robustness, since it closely preserves the perturbed pixel values, allowing adversarial perturbations to be transmitted to the network. Conversely, a smaller $N$ loosens the unpper bound on the randomness, making the encoded sequence more stochastic, which may benifit the robustness at the cost of the clean accuracy. The ablation study of $N$ in Sec.~\ref{sec:exp_ablation} empirically supports this point. Therefore, an appropriate $N$ can be chosen to control the randomness of the encoded sequence and balcance the clean accuracy against adversarial robustness.

Since the proposed SQE method involves a discrete sampling operation, it is not directly differentiable. During training, we employ the Straight-Through Estimator (STE) to approximate the gradients~\citep{STE}, where the forward pass performs stochastic quantization while the backward pass directly propagates the upstream gradients through the encoding layer. Details are provided in Appendix C.

\subsection{Analysis of the Quantization Scale }

In this section, we show that the proposed SQE method can be regarded as a general encoding framework that connects Poisson encoding and direct encoding through the quantization scale $N$.

\begin{theorem}[Poisson Encoding]
\label{thr:n=1}
    When $N=1$, the proposed stochastic quantization encoding method reduces to Poisson encoding.
\end{theorem}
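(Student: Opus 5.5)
The plan is to substitute $N=1$ into the definition of the SQE method and check that the resulting per-time-step distribution of $s_i^0[t]$ matches the Poisson encoding distribution $P(s_i^0[t]=1)=x_i$, $P(s_i^0[t]=0)=1-x_i$. Independence across time steps holds by construction in both schemes, so agreement of the one-step marginals gives agreement of the whole sequence $\{s_i^0[t]\}_{t=1}^{T}$ as a distribution on sequences.

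I would split into two cases according to whether $x_iN=x_i$ is an integer.

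\textbf{Case $x_i\in(0,1)$.} Here $\lfloor x_i\rfloor=0$ and $\lceil x_i\rceil=1$, so the two quantization levels are exactly $\{0,1\}$. The fractional part is $x_i-\lfloor x_i\rfloor=x_i$. The defining probabilities therefore become $P(s_i^0[t]=1)=x_i$ and $P(s_i^0[t]=0)=1-x_i$, which is precisely Poisson encoding.

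\textbf{Case $x_i\in\{0,1\}$.} Here $x_iN\in\mathbb{Z}$, and by the special-case convention $s_i^0[t]=x_i$ deterministically. Poisson encoding with firing probability $0$ or $1$ also yields the constant spike train $0$ or $1$ almost surely, so the two again coincide.

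The only step needing care is this boundary case, since SQE handles it by a separate deterministic rule rather than by the sampling formula. It reduces to noting that a Bernoulli variable with parameter $0$ or $1$ is degenerate. As a consistency check, Theorem~\ref{thr:unbias} gives mean $x_i$ and the variance is $x_i(1-x_i)\le 1/4$, matching Theorem~\ref{thr:random} at $N=1$.
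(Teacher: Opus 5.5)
Your proposal is correct and matches the paper's proof: the same case split (the paper treats $x_i=0$ and $x_i=1$ as two separate cases), and the same computation that at $N=1$ the quantization levels are $\{0,1\}$ with upper-level probability $x_i$. You also make two points explicit that the paper leaves implicit: that Poisson encoding with firing probability $0$ or $1$ is degenerate, and that i.i.d.\ sampling means matching the per-step distribution suffices for the whole sequence.
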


\begin{theorem}[Direct Encoding]
\label{thr:n=infty}
    When $N\to+\infty$, the upper bound of the randomness in the proposed stochastic quantization encoding method approaches zero, and the method reduces to direct encoding.
\end{theorem}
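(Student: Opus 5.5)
The statement to prove is Theorem~\ref{thr:n=infty}. It has two parts: the randomness bound vanishes, and the encoded values converge to the direct-encoding input. I would prove them separately, both from the explicit two-point distribution of $s_i^0[t]$.

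\textbf{Randomness bound.} This part follows at once from Theorem~\ref{thr:random}. For every $x_i\in[0,1]$ we have $0\leqslant \operatorname{Var}[s_i^0[t]]\leqslant \frac{1}{4N^2}$. The bound does not depend on $x_i$, and $\frac{1}{4N^2}\to 0$ as $N\to+\infty$. Hence $\sup_{x_i\in[0,1]}\operatorname{Var}[s_i^0[t]]\to 0$.

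\textbf{Reduction to direct encoding.} Here I would show that $s_i^0[t]$ converges to $x_i$, which is the direct-encoding value $s_i^0[t]=x_i$ at every time step. The key observation is deterministic. In the case $x_iN\notin\mathbb{Z}$, both support points $\lfloor x_iN\rfloor/N$ and $\lceil x_iN\rceil/N$ lie within $1/N$ of $x_i$. In the case $x_iN\in\mathbb{Z}$, the encoding is exactly $x_i$ by definition. So surely, for every realization and every $t$,
\begin{equation}
    \left| s_i^0[t]-x_i\right| \leqslant \frac{1}{N}.
\end{equation}
Letting $N\to+\infty$ gives $s_i^0[t]\to x_i$ uniformly over $x_i\in[0,1]$, over $t\in\{1,\dots,T\}$, and over the randomness. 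This is stronger than convergence in probability or in mean square.

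The mean-square version also follows from the earlier results. Theorem~\ref{thr:unbias} gives $\mathbb{E}[s_i^0[t]]=x_i$, so
\begin{equation}
    \mathbb{E}\!\left[(s_i^0[t]-x_i)^2\right]=\operatorname{Var}[s_i^0[t]]\leqslant \frac{1}{4N^2}\to 0.
\end{equation}
Either route shows that the whole encoded sequence $\{s_i^0[t]\}_{t=1}^T$ converges to the constant sequence $x_i$.

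\textbf{Main obstacle.} The step needing most care is making precise what ``reduces to direct encoding'' means. It cannot mean equality for any finite $N$, since for generic $x_i$ the value $x_iN$ is never an integer. I would therefore state the result as convergence, almost sure and uniform via the $1/N$ bound. I would also remark that for 8-bit images with $x_i=k/255$, any $N$ that is a multiple of $255$ gives $x_iN\in\mathbb{Z}$. For such $N$ the encoding equals direct encoding exactly, by the special-case definition.
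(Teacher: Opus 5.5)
Your proof is correct and follows essentially the same route as the paper. The paper also bounds $|s_i^0[t]-x_i|\leqslant 1/N$ via the floor/ceiling bracketing, lets $N\to\infty$, and remarks that $N=255$ gives exact direct encoding for 8-bit images. Your explicit appeal to Theorem~\ref{thr:random} for the vanishing variance bound and your mean-square variant are small additions that the paper's proof leaves implicit.
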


The formal proofs of the above theorems are provided in the Appendix D. Here, we give an intuitive explanation. As shown in the left part of Fig.~\ref{fig:intro}, the distance between the upper and lower quantization levels is $1/N$. When $N=1$, the upper quantization level is $1$, while the lower quantization level is $0$. Therefore, the proposed method encodes each pixel into a sequence taking values from $\{0,1\}$, which has the same form as Poisson encoding. When $N\to+\infty$, the distance between the upper and lower quantization levels becomes extremely small. Consequently, both quantization levels become arbitrarily close to the original pixel value $x_i$, and the encoded sequence becomes nearly deterministic. In this case, the proposed method approaches direct encoding.

\subsection{A Biological Perspective}

It is interesting to find that SQE exhibits a behavior that is similar to population coding in the brain. Population coding~\citep{population_code} refers to a coding scheme in which information is represented collectively by a group of neurons, rather than relying on highly selective responses from individual neurons. One important property of population coding is its representational stability~\citep{population_code_size}. A larger neuronal population can more effectively average out trial-to-trial fluctuations in individual neuronal responses, resulting in a more stable representation, whereas a smaller population tends to produce a more random representation.

This offers a biological perspective on our method, in which the quantization scale plays a role analogous to the population size. The encoded result can be regarded as the information represented by the population coding. A larger quantization scale suppresses the randomness of the encoded values, driving them closer to the original input value, whereas a smaller scale admits stronger stochastic fluctuations and thus produces a more random representation.

\section{Experiments}

\begin{table*}[!t]
\caption{Effectiveness of the proposed SQE on the CIFAR-10 dataset under different training strategies, measured by classification accuracy (\%). The last column reports the average accuracy across all attacks. Better results are highlighted in gray.}
\centering
\resizebox{0.8\textwidth}{!}{
\begin{tabular}{cccllllllll}
\toprule
\multicolumn{1}{l}{Architecture}     & \multicolumn{1}{l}{Training}   & Encoding & Clean  & RFGSM  & PGD10  & PGD30  & PGD50  & APGD10 & Average \\
\midrule
\multirow{10}{*}{VGG-11} & \multirow{2}{*}{Vanilla}   & direct   & \cellcolor{gray!25}90.85 & 0.02  & 0.00  & 0.00  & 0.00  & 0.00    & 0.004    \\
                         &                            & SQE   & 87.36 & \cellcolor{gray!25}9.53  & \cellcolor{gray!25}9.43  & \cellcolor{gray!25}6.75  & \cellcolor{gray!25}6.36  & \cellcolor{gray!25}16.82 & \cellcolor{gray!25}9.78 \\
                         \cmidrule(lr){2-10}
                         & \multirow{2}{*}{AT}        & direct   & \cellcolor{gray!25}89.81 & 18.07 & 17.18 & 14.46 & 14.06 & 14.60 & 15.67 \\
                         &                            & SQE   & 88.51 & \cellcolor{gray!25}25.20 & \cellcolor{gray!25}24.94 & \cellcolor{gray!25}21.05 & \cellcolor{gray!25}20.80 & \cellcolor{gray!25}34.37 & \cellcolor{gray!25}25.27 \\
                         \cmidrule(lr){2-10}
                         & \multirow{2}{*}{RAT}       & direct   & \cellcolor{gray!25}90.44 & 19.56 & 19.05 & 13.84 & 13.15 & 17.31 & 16.58 \\
                         &                            & SQE   & 87.42 & \cellcolor{gray!25}30.09 & \cellcolor{gray!25}30.22 & \cellcolor{gray!25}24.77 & \cellcolor{gray!25}24.05 & \cellcolor{gray!25}43.41 & \cellcolor{gray!25}30.51 \\
                         \cmidrule(lr){2-10}
                         & \multirow{2}{*}{SR}        & direct   & \cellcolor{gray!25}89.28 & 12.30 & 11.71 & 8.45  & 8.08  & 9.35  & 9.98  \\
                         &                            & SQE   & 86.10 & \cellcolor{gray!25}25.79 & \cellcolor{gray!25}25.50 & \cellcolor{gray!25}22.66 & \cellcolor{gray!25}21.98 & \cellcolor{gray!25}34.20 & \cellcolor{gray!25}26.03 \\
                         \cmidrule(lr){2-10}
                         & \multirow{2}{*}{TGO}       & direct   & \cellcolor{gray!25}90.25 & 0.14  & 0.12  & 0.02  & 0.01  & 0.16  & 0.09  \\
                         &                            & SQE   & 85.69 & \cellcolor{gray!25}15.66 & \cellcolor{gray!25}15.52 & \cellcolor{gray!25}10.56 & \cellcolor{gray!25}9.84  & \cellcolor{gray!25}22.15 & \cellcolor{gray!25}14.75 \\
\midrule
\multirow{10}{*}{WRN-16} & \multirow{2}{*}{Vanilla}   & direct   & \cellcolor{gray!25}94.05 & 0.00  & 0.00  & 0.00  & 0.00  & 10.00    & 2.00    \\
                         &                            & SQE   & 88.95 & \cellcolor{gray!25}9.20  & \cellcolor{gray!25}9.19  & \cellcolor{gray!25}6.09  & \cellcolor{gray!25}6.06  & \cellcolor{gray!25}18.63 & \cellcolor{gray!25}9.83 \\
                         \cmidrule(lr){2-10}
                         & \multirow{2}{*}{AT}        & direct   & \cellcolor{gray!25}91.18 & 18.87 & 18.64 & 15.54 & 15.17 & 16.04 & 16.85 \\
                         &                            & SQE   & 89.03 & \cellcolor{gray!25}22.74 & \cellcolor{gray!25}22.43 & \cellcolor{gray!25}18.91 & \cellcolor{gray!25}18.18 & \cellcolor{gray!25}33.32 & \cellcolor{gray!25}23.12 \\
                         \cmidrule(lr){2-10}
                         & \multirow{2}{*}{RAT}       & direct   & \cellcolor{gray!25}92.14 & 21.14 & 20.45 & 14.05 & 13.12 & 19.26 & 17.60 \\
                         &                            & SQE   & 88.91 & \cellcolor{gray!25}34.22 & \cellcolor{gray!25}33.71 & \cellcolor{gray!25}27.37 & \cellcolor{gray!25}26.63 & \cellcolor{gray!25}48.43 & \cellcolor{gray!25}34.07 \\
                         \cmidrule(lr){2-10}
                         & \multirow{2}{*}{SR}        & direct   & \cellcolor{gray!25}92.62 & 0.92  & 0.86  & 0.29  & 0.25  & 0.68  & 0.60  \\
                         &                            & SQE   & 85.56 & \cellcolor{gray!25}23.59 & \cellcolor{gray!25}23.72 & \cellcolor{gray!25}20.33 & \cellcolor{gray!25}19.89 & \cellcolor{gray!25}32.51 & \cellcolor{gray!25}24.01 \\
                         \cmidrule(lr){2-10}
                         & \multirow{2}{*}{TGO}       & direct   & \cellcolor{gray!25}89.00 & 0.18  & 0.18  & 0.00  & 0.00  & 0.16  & 0.10  \\
                         &                            & SQE   & 83.24 & \cellcolor{gray!25}28.64 & \cellcolor{gray!25}28.64 & \cellcolor{gray!25}22.08 & \cellcolor{gray!25}21.34 & \cellcolor{gray!25}39.85 & \cellcolor{gray!25}28.11 \\
\bottomrule
\end{tabular}
}
\label{tab:cifar10_main}
\end{table*}

We evaluate the effectiveness of SQE on the CIFAR-10 and CIFAR-100 datasets. On the CIFAR-10 dataset, experiments are conducted using both VGG-11 and WideResNet-16 (WRN-16) based SNNs, while on CIFAR-100 only WRN-16 is used~\citep{simonyan2014very,zagoruyko2016wide}. For all experiments, the time step is fixed as $T=8$. Both the network architectures and training protocols follow those used in prior works~\citep{ding2022snn,liu2024enhancing,TGO} to ensure a fair comparison.

To evaluate adversarial robustness, we consider several popular attacks, including RFGSM~\citep{wong2020fast}, PGD~\citep{aleksander2018towards}, and APGD~\citep{APGD}. 
The perturbation budget is set to $8/255$ for all attacks. For iterative attacks, we indicate the number of iterations in the attack name. For example, PGD10 denotes PGD with 10 iterations.
Since the attack effectiveness may depend on the choice of gradient approximation methods and surrogate functions, we follow the evaluation strategy adopted in the prior work~\citep{liu2024enhancing}. Specifically, a set of surrogate functions and evaluate both STBP-based and RGA-based attacks~\citep{bu2023rate} are considered. During robustness evaluation, we test all combinations of attack settings and report the results under the strongest attack.

\subsection{Robustness Improvement under Different Training Strategies}
\label{sec:exp_main}

We evaluate the proposed method with $N=2$ on the CIFAR-10 dataset and $N=3$ for models trained on the CIFAR-100 dataset. As for the training strategy, we consider Adversarial Training (AT)~\citep{kundu2021hire} and its variant Regularized Adversarial Training (RAT)~\citep{ding2022snn}. We further include methods based on specially designed regularization terms, namely Sparse Regularization (SR)~\citep{liu2024enhancing} and Threshold Guarding Optimization (TGO)~\citep{TGO}. By combining the proposed SQE method with these representative defense strategies, we investigate its compatibility with existing training frameworks and its ability to further improve adversarial robustness. The implementation details of different training strategies are provided in the Appendix E. 

Tabs.~\ref{tab:cifar10_main} and \ref{tab:cifar100_main} report the classification accuracies on the CIFAR-10 and CIFAR-100 datasets, respectively, under both clean and adversarial conditions. In both tables, the "direct" in the third column denotes the baseline direct encoding, while the "SQE" denotes the proposed SQE method.

\begin{table*}[t]
\caption{Effectiveness of the proposed SQE on the CIFAR-100 dataset under different training strategies, measured by classification accuracy (\%). The last column reports the average accuracy over all attacks. Better results are highlighted in gray.}
\centering
\resizebox{0.8\textwidth}{!}{
\begin{tabular}{ccclllllll}
\toprule
\multicolumn{1}{l}{Architecture}     & \multicolumn{1}{l}{Training} & Encoding & clean  & RFGSM  & PGD10  & PGD30  & PGD50  & APGD10 & Average \\
\midrule
\multirow{10}{*}{WRN-16} & \multirow{2}{*}{Vanilla}     & direct   & \cellcolor{gray!25}73.44 & 0.00  & 0.00  & 0.00  & 0.00  & 0.00  & 0.00  \\
                         &                              & SQE   & 67.82 & \cellcolor{gray!25}2.93  & \cellcolor{gray!25}2.79  & \cellcolor{gray!25}1.74  & \cellcolor{gray!25}1.61  & \cellcolor{gray!25}9.04  & \cellcolor{gray!25}3.62  \\
                         \cmidrule(lr){2-10}
                         & \multirow{2}{*}{AT}          & direct   & \cellcolor{gray!25}67.99 & 11.22 & 10.92 & 8.92  & 8.30  & 9.25  & 9.72  \\
                         &                              & SQE   & 65.59 & \cellcolor{gray!25}11.97 & \cellcolor{gray!25}11.82 & \cellcolor{gray!25}9.60  & \cellcolor{gray!25}9.18  & \cellcolor{gray!25}19.63 & \cellcolor{gray!25}12.44 \\
                         \cmidrule(lr){2-10}
                         & \multirow{2}{*}{RAT}         & direct   & \cellcolor{gray!25}68.16 & 12.04 & 11.62 & 7.74  & 7.10  & 10.80 & 9.86  \\
                         &                              & SQE   & 65.22 & \cellcolor{gray!25}17.89 & \cellcolor{gray!25}17.96 & \cellcolor{gray!25}13.35  & \cellcolor{gray!25}13.04  & \cellcolor{gray!25}27.34 & \cellcolor{gray!25}17.92 \\
                         \cmidrule(lr){2-10}
                         & \multirow{2}{*}{SR}          & direct   & \cellcolor{gray!25}68.83 & 1.29  & 1.25  & 0.74  & 0.67  & 1.18  & 1.03  \\
                         &                              & SQE   & 61.24 & \cellcolor{gray!25}12.02 & \cellcolor{gray!25}11.85 & \cellcolor{gray!25}10.11 & \cellcolor{gray!25}9.75  & \cellcolor{gray!25}17.52 & \cellcolor{gray!25}12.25 \\
                         \cmidrule(lr){2-10}
                         & \multirow{2}{*}{TGO}         & direct   & \cellcolor{gray!25}69.22 & 0.27  & 0.22  & 0.03  & 0.03  & 0.30  & 0.17  \\
                         &                              & SQE   & 60.72 & \cellcolor{gray!25}12.41 & \cellcolor{gray!25}12.58 & \cellcolor{gray!25}8.00  & \cellcolor{gray!25}7.46  & \cellcolor{gray!25}19.64 & \cellcolor{gray!25}12.02 \\
\bottomrule
\end{tabular}
}
\label{tab:cifar100_main}
\end{table*}

From Tab.~\ref{tab:cifar10_main}, it can be observed that the proposed SQE method consistently improves adversarial robustness across different attack settings, while causing a limited reduction in clean accuracy. This trend is particularly evident for VGG-11. For instance, compared with vanilla SNNs using direct encoding, the proposed quantization encoding causes about a 3\% drop in clean accuracy, but achieves nearly a 10\% improvement in robustness against RFGSM and PGD10.
When combined with AT, the proposed encoding improves the accuracy under APGD10 by approximately 20\% compared with AT alone, while reducing the clean accuracy by only about 1\%. Similar observations can be made for the recently proposed SR strategy. By incorporating the proposed quantizatin encoding method, the accuracy under strong attacks such as PGD30 and PGD50 is further improved by more than 10\%, at the cost of only around a 3\% decrease in clean accuracy. As for WRN-16, the RAT-trained model equipped with the proposed quantization encoding achieves an average accuracy improvement of approximately 17\% across all attacks compared with RAT alone, at the cost of less than a 5\% reduction in clean accuracy.

As shown in Tab.~\ref{tab:cifar100_main}, the proposed SQE method generally improves adversarial accuracy on CIFAR-100 across different training strategies. Compared with direct encoding, quantization encoding sacrifices a certain amount of clean accuracy, which is attributed to the randomness introduced during the quantization process. Nevertheless, such randomness brings clear benefits under adversarial settings. For instance, the classification accuracy increases from 0.00\% to 9.04\% under vanilla training against APGD10. When combined with existing defense strategies, the gains are more evident. The average classificaiton accuracy against attacks improves from 9.86\% to 17.92\% under the RAT strategy, from 1.03\% to 12.25\% under SR strategy, and from 0.17\% to 12.02\% under TGO strategy. 

These results suggest that SQE method can serve as an effective and complementary input-level strategy for improving the adversarial robustness of SNNs.

\subsection{Albation Study on Quantiztaion Scale}
\label{sec:exp_ablation}
We conduct the ablaiton study of the quantiztaion scale $N$ on the CIFAR-10 dataset with the normal training strategy. The models are with the VGG-11 architecture. Specifically, we train the SNNs with differenet choices of $N$ in the set $\{2, 4, 8, 16\}$. The models are evaluated in terms of both clean accuracy and adversarial robustness under different attacks. The results are shown in Fig.~\ref{fig:ablation_N} where $N=1$ and $N=+\infty$ represents the Poisson encoding and direct encoding respectively according to Theorems~\ref{thr:n=1} and~\ref{thr:n=infty}.

\begin{figure}[h]
    \centering
    \includegraphics[width=0.8\linewidth]{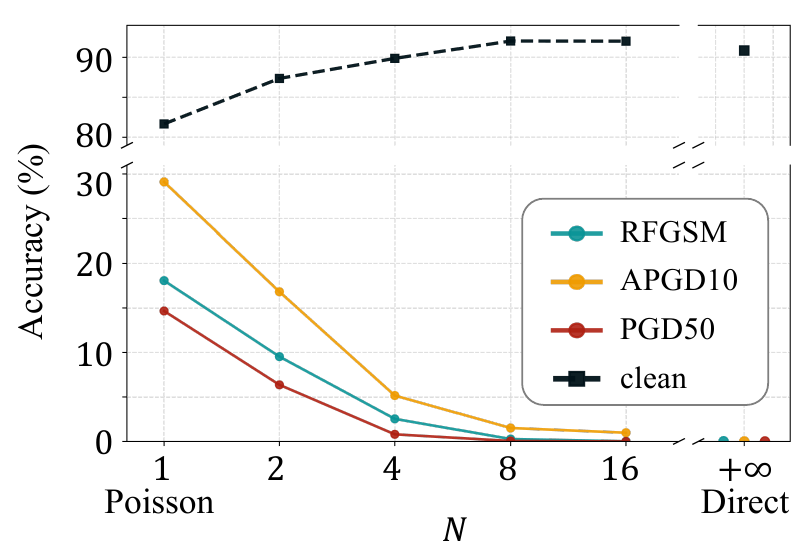}
    \caption{Ablation study of the quantization scale $N$ on the CIFAR-10 dataset with the VGG-11 arhitectue.}
    \label{fig:ablation_N}
\end{figure}

The experimental results are consistent with our theoretical analysis, revealing a clear trade-off controlled by the quantization scale $N$. As $N$ increases, the encoded representation becomes more deterministic, resulting in higher clean accuracy but reduced adversarial robustness. Conversely, smaller values of $N$ introduce stronger randomness, which improves robustness at the cost of clean accuracy. For example, increasing $N$ from 2 to 8 improves the clean accuracy by approximately 5\%, while causing the accuracies under all considered attacks to decrease to nearly zero.

\subsection{Comparsion with Poisson Encoding}
\label{sec:exp_poisson}

We further compare the proposed SQE method with direct and Poisson encoding under both white-box and black-box attacks to verify that it achieves a favorable trade-off between clean accuracy and adversarial robustness. We evaluate robustness under one-step attacks as well as iterative attacks. The results are reported in Tab.\ref{tab:direct_quant_poi}. For the black-box setting, adversarial examples are generated using surrogate models trained with direct encoding, which share the same architecture as the target models with different random seeds.

\begin{table*}[!t]
\caption{Comparison of direct encoding, Poisson encoding, and the proposed SQE method. The proposed SQE method is highlighted in gray, as it provides a more balanced trade-off between clean accuracy and adversarial robustness.}
\centering
\resizebox{0.8\textwidth}{!}{
\begin{tabular}{cllcccccc}
\toprule
\multicolumn{1}{l}{}                                                       &          &       & \multicolumn{3}{c}{white-box attack} & \multicolumn{3}{c}{black-box attack} \\
\cmidrule(lr){4-6} \cmidrule(lr){7-9}
\multicolumn{1}{l}{}                                                       & encoding & clean & RFGSM  & APGD10  & PGD50  & RFGSM  & APGD10  & PGD50  \\
\midrule
\multirow{3}{*}{\begin{tabular}[c]{@{}c@{}}CIFAR-10\\ VGG-11\end{tabular}}   & direct   & 90.85 & 0.02   & 0.00    & 0.00   & 8.23   & 18.84   & 7.38   \\
                                                                           & \cellcolor{gray!25}SQE & \cellcolor{gray!25}87.36 & \cellcolor{gray!25}9.53 & \cellcolor{gray!25}16.82 & \cellcolor{gray!25}6.36 & \cellcolor{gray!25}60.56 & \cellcolor{gray!25}74.95 & \cellcolor{gray!25}62.65 \\
                                                                           & poisson  & 81.67 & 18.07  & 29.11   & 14.66  & 61.97  & 76.15   & 62.89  \\
\midrule                                                                           
\multirow{3}{*}{\begin{tabular}[c]{@{}c@{}}CIFAR10\\ WRN-16\end{tabular}}   & direct   & 94.05 & 0.00   & 10.00   & 0.00   & 0.05   & 2.48    & 0.03   \\
                                                                           & \cellcolor{gray!25}SQE & \cellcolor{gray!25}88.95 & \cellcolor{gray!25}9.20 & \cellcolor{gray!25}18.63 & \cellcolor{gray!25}6.06 & \cellcolor{gray!25}72.01 & \cellcolor{gray!25}83.60 & \cellcolor{gray!25}72.98 \\
                                                                           & poisson  & 83.20 & 17.73  & 30.13   & 14.29  & 71.49  & 82.90   & 71.53  \\
\midrule                                                                           
\multirow{3}{*}{\begin{tabular}[c]{@{}c@{}}CIFAR-100\\ WRN-16\end{tabular}} & direct   & 73.44 & 0.00   & 0.00    & 0.00   & 0.84   & 9.39    & 0.42   \\
                                                                            & \cellcolor{gray!25}SQE & \cellcolor{gray!25}67.82 & \cellcolor{gray!25}2.93 & \cellcolor{gray!25}9.04 & \cellcolor{gray!25}1.61 & \cellcolor{gray!25}43.40 & \cellcolor{gray!25}59.47 & \cellcolor{gray!25}43.83 \\
                                                                           & poisson  & 54.07 & 10.34  & 16.55   & 7.93   & 39.13  & 52.57   & 39.53  \\
\bottomrule                                                                           
\end{tabular}
}
\label{tab:direct_quant_poi}
\end{table*}

\begin{figure}[!t]
    \centering
    \includegraphics[width=0.98\linewidth]{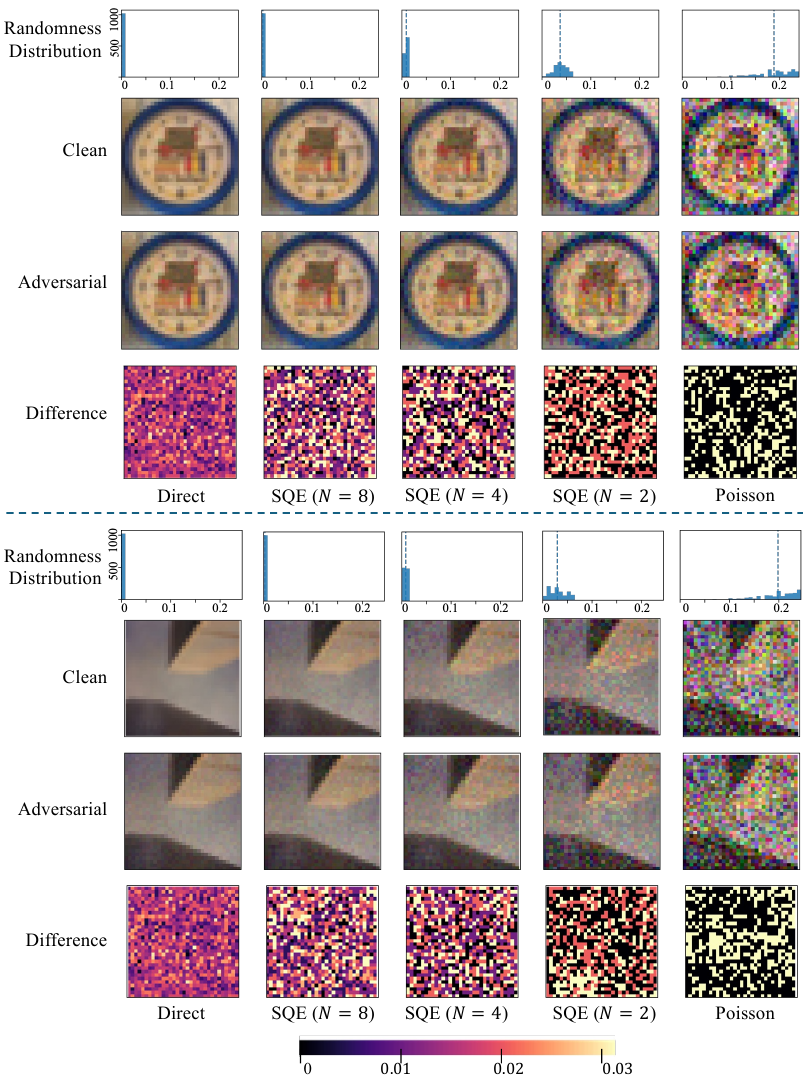}
    \caption{Visualization of clean and adversarial images (generated by PGD10) after different encoding methods. For SQE and Poisson encoding, the encoded results are averaged over 8 time steps. For each encoding method, the first row shows the distribution of pixel-wise encoding randomness (Definition~\ref{def:randomness}), with the mean randomness across all pixels indicated by the dashed line. The last row shows the difference between the encoded clean image and the encoded adversarial image.}
    \vspace{-10pt}
    \label{fig:visual}
\end{figure}

As expected, the proposed SQE method exhibits an intermediate behavior between the two baseline encoding schemes. Specifically, it incurs a much smaller loss in clean accuracy than Poisson encoding, while achieving substantially higher adversarial robustness than direct encoding. For example, in the white-box scenario, the proposed quantization encoding improves the accuracy under APGD10 from 0 to 16.82\% compared with direct encoding on CIFAR-10 VGG-11, while sacrificing only about 3\% of clean accuracy. Although Poisson encoding further improves the accuracy under APGD10 to 29.11\%, it reduces the clean accuracy to only 81.67\%, which is more than 9\% lower than that of direct encoding. 
Notably, in the black-box scenario, the proposed quantization encoding achieves comparable or even better robustness than Poisson encoding. 

These results suggest that an appropriate level of encoding randomness is crucial for achieving a favorable balance between clean accuracy and adversarial robustness. Direct encoding preserves the original image information and therefore achieves the highest clean accuracy, but offers almost no robustness against adversarial perturbations. In contrast, Poisson encoding introduces the strongest randomness, substantially improving robustness but also severely degrading the information fidelity of the encoded inputs, resulting in a significant loss of clean accuracy. The proposed SQE method provides a controllable level of randomness, effectively balancing information preservation and randomness injection. As a result, it achieves a more desirable trade-off between clean accuracy and adversarial robustness.

\subsection{Visulization}
\label{sec:exp_visual}

Fig.~\ref{fig:visual} presents two visual examples of images encoded using different encoding methods. As the quantization scale $N$ decreases, the histogram of pixel-wise encoding randomness gradually shifts to the right, indicating an increase in encoding randomness. or example, in the first example, the average encoding randomness is zero for direct encoding, increases to approximately $0.02$ as $N$ decreases from 8 to 2, and reaches about $0.19$ for Poisson encoding ($N=1$).
This observation confirms that the quantization scale $N$ effectively controls the degree of encoding randomness. Correspondingly, the difference between the encoded clean and adversarial images becomes increasingly sparse. Under direct encoding, the perturbation-induced differences are densely distributed over almost the entire image. As the encoding randomness increases, however, more pixels remain identical after encoding, and the perturbation-induced differences become progressively confined to fewer locations. Together with the quantitative results in Tab.~\ref{tab:direct_quant_poi}, these visualizations indicate that an appropriate level of encoding randomness helps achieve a better balance between preserving clean image information and reducing the effect of adversarial perturbations after encoding.

\section{Conclusion and Discussion}

In this paper, we propose the SQE method to improve the robustness of SNNs. By adjusting the quantization scale, the proposed method can control the trade-off between clean accuracy and adversarial robustness. 
This work offers a new perspective on adversarial robustness in SNNs by highlighting the important role of input encoding.
One limitation of SQE is that the quantization scale $N$ is fixed for all pixels in an image and is selected empirically, which limits its flexibility. In future work, we aim to explore learnable and pixel-adaptive quantization scales, 
which may enable a more effective trade-off between clean accuracy and adversarial robustness.

\bibliography{aaai2027}

\clearpage
\appendix

\section{Proof of Theroem 1}

\begin{manualtheorem}{1}[Unbiasedness]
    The stochastic quantization encoding is unbiased at each time step,
    \begin{equation}
        \mathbb{E}\left[ s^0_i[t] \right] = x_i.
    \end{equation}
    and the time-averaged encoding converges to $x_i$ as $T\to \infty$, i.e.,
    \begin{equation}
        \frac{1}{T}\sum_{t=1}^{T} s^0_i[t] \xrightarrow{\;T \to \infty\;} x_i.
    \end{equation}
\end{manualtheorem}

\begin{proof}
\mbox{}

\textbf{Case 1: $x_iN \in \mathbb{Z}$.}
   Here $s_i^0[t] = x_i$ is deterministic, and therefore $\mathbb{E}\!\left[ s_i^0[t] \right] = x_i$.

\textbf{Case 2: $x_iN \notin \mathbb{Z}$.} 
    For convenience, let
\begin{equation}
    p_0 = x_i\cdot N - \lfloor x_i\cdot N \rfloor.
\end{equation}
Then
\begin{align}
    \mathbb{E}\left[
        s_i^0[t]
    \right] 
    &= \frac{\lceil x_i\cdot N \rceil}{N} \cdot p_0 + \frac{\lfloor x_i\cdot N \rfloor}{N} \cdot (1 - p_0) \\
    &= \frac{\lceil x_i\cdot N \rceil}{N} \cdot p_0 + \frac{\lfloor x_i\cdot N \rfloor}{N} - \frac{\lfloor x_i\cdot N \rfloor}{N} \cdot p_0 \\
    &= \frac{\lceil x_i\cdot N \rceil - \lfloor x_i\cdot N \rfloor}{N} \cdot p_0 + \frac{\lfloor x_i\cdot N \rfloor}{N} \\
    &= \frac{1}{N}\cdot p_0 + \frac{\lfloor x_i\cdot N \rfloor}{N} \\
    &= \frac{1}{N}\cdot \left(x_i\cdot N - \lfloor x_i\cdot N \rfloor\right) + \frac{\lfloor x_i\cdot N \rfloor}{N} \\
    &= x_i.
\end{align}

Therefore, we have
\begin{equation}
        \mathbb{E}\!\left[ s_i^0[t] \right] = x_i.
    \end{equation}

Consequently, since $\{s_i^0[t]\}_{t=1}^{T}$ are independent and identically distributed, by the law of large numbers the sample mean of the encoded sequence converges to $x_i$ as $T \to \infty$:
    \begin{equation}
        \frac{1}{T}\sum_{t=1}^{T} s_i^0[t] \xrightarrow{\;T \to \infty\;} x_i.
    \end{equation}
\end{proof}

\section{Proof of Theorem 2}

\begin{manualtheorem}{2}[Randomness]
    For any pixel value $x_i\in[0,1]$, the randomness of the proposed stochastic quantization encoding method is bounded by
    \begin{equation}
    \label{eq:supp_random}
        0 \leqslant \operatorname{Var}\left[s_i^0[t]\right] \leqslant \frac{1}{4N^2}.
    \end{equation}
\end{manualtheorem}

\begin{proof}
\mbox{}

\textbf{Case 1: $x_iN \in \mathbb{Z}$.}
   Here $s_i^0[t] = x_i$ is deterministic, and therefore $\operatorname{Var}\left[s_i^0[t]\right] = 0$ which satisfies Eq.~(\ref{eq:supp_random}).

\textbf{Case 2: $x_iN \notin \mathbb{Z}$.}
    For convenience, let
        \begin{equation}
            p_0 = x_i\cdot N - \lfloor x_i\cdot N \rfloor.
        \end{equation}
    Then
        \begin{align}
            & \operatorname{Var}\left[s_i^0[t]\right]
            = \mathbb{E}[\left(s_i^0[t]\right)^2] - \mathbb{E}^2[s_i^0[t]] \\
            &= \left(\frac{\lceil x_i\cdot N \rceil}{N}\right)^2 \cdot p_0 + \left(\frac{\lfloor x_i\cdot N \rfloor}{N}\right)^2 \cdot (1-p_0) - x_i^2 \\
            &= \frac{\lceil x_i\cdot N \rceil^2 - \lfloor x_i\cdot N \rfloor^2}{N^2} \cdot p_0 + \frac{\lfloor x_i\cdot N \rfloor^2 }{N^2} - x_i^2\\
            &= \frac{\lceil x_i\cdot N \rceil + \lfloor x_i\cdot N \rfloor}{N^2} \cdot p_0 + \frac{\lfloor x_i\cdot N \rfloor^2}{N^2} - x_i^2\\
            &= \frac{\lceil x_i\cdot N \rceil}{N^2} \cdot p_0 + \frac{\lfloor x_i\cdot N \rfloor}{N^2}(p_0 + \lfloor x_i\cdot N \rfloor) - x_i^2\\
            &= \frac{\lceil x_i\cdot N \rceil}{N^2} \cdot p_0 + \frac{\lfloor x_i\cdot N \rfloor}{N^2} x_i\cdot N - x_i^2\\
            &= \frac{(\lfloor x_i\cdot N \rfloor + 1)(x_i\cdot N - \lfloor x_i\cdot N \rfloor)}{N^2} + \frac{\lfloor x_i\cdot N \rfloor}{N^2} x_i\cdot N - x_i^2 \\
            &= - \frac{\lfloor x_i\cdot N \rfloor^2 -2x_i\cdot N \lfloor x_i\cdot N \rfloor + x_i^2}{N^2} + \frac{x_i\cdot N - \lfloor x_i\cdot N \rfloor}{N^2} \\
            &= - \frac{\left( x_i - \lfloor x_i\cdot N \rfloor\right)^2}{N^2} + \frac{x_i\cdot N - \lfloor x_i\cdot N \rfloor}{N^2} \\
            &= - \frac{p_0^2}{N^2} + \frac{p_0}{N^2} \\
            &= \frac{p_0(1-p_0)}{N^2}.
        \end{align}
    Here, $p_0 \in [0,1]$, it follows that
    \begin{equation}
        0 \leqslant p_0(1-p_0) \leqslant 1/4.
    \end{equation}
    Thus, the randomness of the stochastic quantization encoding at each pixel is bounded by
    \begin{equation}
        0 \leqslant \operatorname{Var}\left[s_i^0[t]\right] \leqslant \frac{1}{4N^2}.
    \end{equation}
\end{proof}

\section{Backward Propagation}
Since the quantization operation involves discrete rounding and random sampling, it is non-differentiable and prevents gradients from being directly back-propagated through the encoding layer. To enable end-to-end training, we employ the Straight-Through Estimator (STE)~\citep{STE}, which retains the stochastic quantization operation in the forward pass while approximating it as an identity mapping in the backward pass. Specifically, the gradient of the loss $L$ with respect to the pixel value $x_i$ is approximated by
\begin{align}
    \frac{
        \partial L
    }{\partial x_i} &= \sum_{t=1}^ T \frac{\partial L}{\partial s_i^0[t]} \cdot \frac{\partial s_i^0[t]}{\partial x_i} \\
                    &\approx \sum_{t=1}^ T \frac{\partial L}{\partial s_i^0[t]}
\end{align}
where the STE approximates the non-differentiable term as
\begin{equation}
    \frac{\partial s_i^0[t]}{\partial x_i}
    \approx 1,
    \qquad t=1,\ldots,T.
\end{equation}
With this approximation, the gradients from all time steps can be directly back-propagated to the input pixel, enabling end-to-end optimization of the entire network while preserving the stochastic quantization behavior in the forward pass.

\section{Proof of Theorems 3 and 4}

\begin{manualtheorem}{3}[Poisson Encoding]
    When $N=1$, the proposed stochastic quantization encoding method reduces to Poisson encoding.
\end{manualtheorem}

\begin{proof}
\mbox{}

\textbf{Case 1: $x_i=0$.}
   According to the definition, we have $s_i^0[t]=0$ for all time steps. Therefore, the encoded value is deterministic.

\textbf{Case 2: $x_i=1$.} 
    According to the definition, $s_i^0[t]=1$ holds for all time steps. Thus, the encoded value is deterministic and always equal to $1$.

\textbf{Case 3: $0 < x_i <1$}
    Since $N=1$ and $0 < x_i < 1$, the upper and lower quantiztaion levels are given by
    \begin{align}
        \frac{\lfloor x_i\cdot N \rfloor}{N} &= \lfloor x_i \rfloor = 0, \\
        \frac{\lceil x_i\cdot N \rceil}{N} &= \lceil x_i \rceil = 1. \\
    \end{align}
    Moreover, the corresponding probabilities are
    \begin{align}
        P\left( s_i^0[t] = 1\right) &= x_i - \lfloor x_i \rfloor = x_i, \\
        P\left( s_i^0[t] = 0\right) &= 1 - x_i.
    \end{align}
    Thus, at each time step, $s_i^0[t]$ is sampled from $\{0,1\}$ with firing probability $x_i$, which is exactly the same as Poisson encoding.

To sum up, when $N=1$, the proposed stochastic quantization encoding method reduces to Poisson encoding.

\end{proof}

\begin{manualtheorem}{4}[Direct Encoding]
    When $N\to+\infty$, the upper bound of the randomness in the proposed stochastic quantization encoding method approaches zero, and the method reduces to direct encoding.
\end{manualtheorem}

\begin{proof}

For any $x_i\in[0,1]$, the stochastic quantization encoding maps $x$ to one of
its two adjacent quantization levels:
\begin{equation}
    s_i^0[t] \in 
    \left\{
    \frac{\lfloor x_iN \rfloor}{N},
    \frac{\lceil x_iN \rceil}{N}
    \right\}.
\end{equation}
By the definitions of the floor and ceiling operators, we have
\begin{equation}
    \lfloor x_iN \rfloor \leq x_iN \leq \lceil x_iN \rceil .
\end{equation}
Dividing both sides by $N$ gives
\begin{equation}
    \frac{\lfloor x_iN \rfloor}{N} \leq x_i \leq
    \frac{\lceil x_iN \rceil}{N}.
\end{equation}
Moreover, the distance between the two adjacent quantization levels is at most
$1/N$, since
\begin{equation}
    \frac{\lceil x_iN \rceil}{N} -
    \frac{\lfloor x_iN \rfloor}{N}
    \leq \frac{1}{N}.
\end{equation}
Therefore, regardless of which quantization level is sampled, we have
\begin{equation}
    \left| s_i^0[t] - x_i \right| \leq \frac{1}{N}.
\end{equation}
Taking the limit $N\to\infty$, we obtain
\begin{equation}
    \lim_{N\to\infty} \left| s_i^0[t] - x_i \right| = 0,
\end{equation}
which implies
\begin{equation}
    s_i^0[t] \to x_i .
\end{equation}
Thus, the proposed stochastic quantization encoding converges to direct
encoding as $N\to\infty$.

It is worth noting that, in the classification experiments considered in this paper, the normalized pixel value is discrete and can be written as $x_i=m/255$, where $m\in\{0,1,2,\ldots,255\}$. Therefore, setting $N=255$ is sufficient to make $x_iN$ an integer. In this case, the stochastic quantization encoding is deterministic, i.e., $s_i^0[t]=x_i$ for all time steps, and thus recovers direct encoding.

\end{proof}

\section{Training Details}
All experiments are conducted on the NVIDIA GeForce RTX 4090 GPU with 24 GB GDDR6X memory.
In this section, we provide the detailed training parameters for all training strategies. Some parameters, such as the learning rate and training epochs, are the same across different strategies, while others are specific to particular strategies, such as the weights of regularization terms. 

We first describe the common parameters across different training strategies. For all models trained on CIFAR-10 and CIFAR-100 datasets, we use an initial learning rate of 0.1, a momentum of 0.9, and a weight decay of $5 \times 10^{-4}$. All models are trained for 200 epochs with a batch size of 64, and the number of time steps is fixed at 8.

\begin{table}[htbp]
\caption{The common parameters for all models on CIFAR-10 and CIFAR-100 datasets.}
\centering
\resizebox{0.43\linewidth}{!}{
\begin{tabular}{ll}
\toprule
parameters  & value  \\
\midrule
learning rate & 0.1  \\
momentum      & 0.9  \\
weight decay  & 5e-4 \\
batch size    & 64   \\
epochs        & 200  \\
time step     & 8   \\
\bottomrule
\end{tabular}
}
\label{tab:common_setting}
\end{table}

We then describe the strategy-specific parameters for different training strategies. For adversarial training (AT), we use the PGD attack to generate adversarial examples during training. The perturbation budget is set to $2.0/255$, and the number of attack iterations is set to 5. For the RAT strategy, we use the same attack settings as AT. Following the authors' recommendation~\cite{ding2022snn}, we adopt the BPTR attack mode and set the weight of the regularization term to 0.001 for VGG-11 and 0.004 for WRN-16. For the SR strategy, the finite-difference step size is set to 0.01, and the sparsity regularization weight is set to 0.002 for all models. For the TGO strategy, $\lambda_{\max}$ is set to 0.4 for all models. The parameters are summarized as follows.

\begin{table}[htbp]
\caption{Strategy-specific parameter settings for different training strategies.}
\centering
\resizebox{0.75\linewidth}{!}{
\begin{tabular}{cllc}
\toprule
\multicolumn{1}{l}{strategy} & parameter               & value & models                             \\
\midrule
\multirow{3}{*}[-0.8ex]{AT}   & attack                  & PGD   & \multirow{3}{*}[-0.8ex]{All}               \\ \cmidrule(lr){2-3}
                     & $\epsilon$              & 2.0 / 255     &                                    \\ \cmidrule(lr){2-3}
                     & iteration               & 5     &                                    \\ 
\midrule
\multirow{6}{*}[-2.5ex]{RAT} & attack                  & PGD   & \multirow{4}{*}[-1.4ex]{All}               \\ \cmidrule(lr){2-3}
                     & $\epsilon$              & 2.0 / 255     &                                    \\ \cmidrule(lr){2-3}
                     & iteration               & 5     &                                    \\ \cmidrule(lr){2-3}
                     & attack mode             & bptr  &                                    \\ \cmidrule(lr){2-4}
                     & \multirow{2}{*}{weight} & 0.001 & VGG-11                             \\ \cmidrule(lr){3-4}
                     &                         & 0.004 & WRN-16                             \\ 
\midrule
\multirow{2}{*}[-0.6ex]{SR}  & step size       & 0.01  & \multirow{2}{*}[-0.6ex]{All}       \\ \cmidrule(lr){2-3}
                     & weight                  & 0.002 &                                    \\
\midrule
TGO                  & $\lambda_{max}$         & 0.4   & All                                \\
\bottomrule
\end{tabular}
}
\label{tab:specific_setting}
\end{table}

\begin{figure*}[!t]
    \centering
    \includegraphics[width=0.8\textwidth]{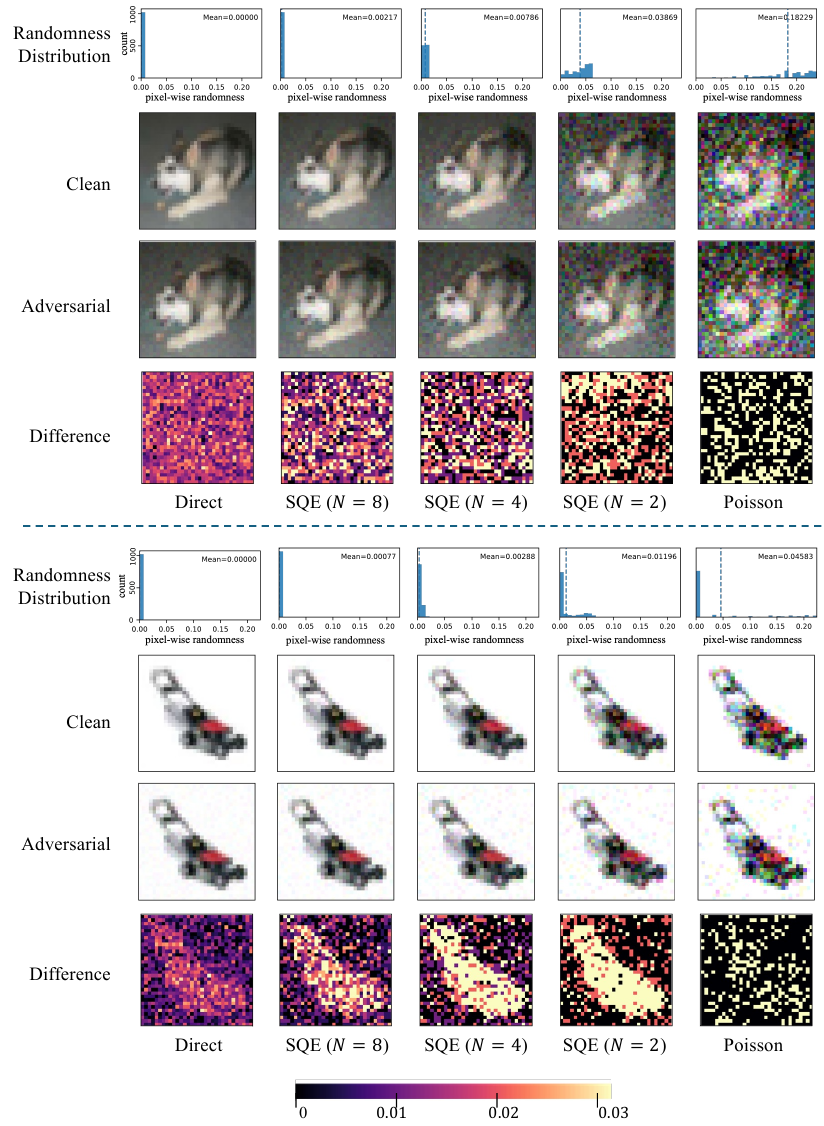}
    \caption{Visualization of clean images and their corresponding adversarial images generated by PGD10 after different encoding methods. For SQE and Poisson encoding, the encoded images are obtained by averaging the encoding results over eight time steps. For each encoding method, the first row shows the distribution of pixel-wise encoding randomness (Definition~1), where the dashed line and the value in the upper-right corner indicate the mean randomness across all pixels. The last row shows the absolute difference between the encoded clean and adversarial images, illustrating the effect of adversarial perturbations after encoding. Zoom in for a better view.}
    \label{fig:visual_appendix}
\end{figure*}

\section{Visualization Results}

In this section we provide additional visualizations that illustrate how the quantization scale $N$ affects the encoded images. Fig.~\ref{fig:visual_appendix} shows two groups of examples, within each group the four rows correspond to the pixel-wise encoding randomness, the encoded clean image, the encoded adversarial image, and the magnitude of their difference, which corresponds to the adversarial perturbation after encoding. Each visualized encoded image is obtained by averaging the encoding results over 8 time steps. There are three main findings.

First, as the quantization scale $N$ decreases, the distribution of pixel-wise encoding randomness gradually shifts to the right, indicating an increase in encoding randomness. This observation is consistent with Theorem~2, which shows that the quantization scale $N$ controls the degree of encoding randomness. For example, in the first example, the distribution is concentrated near zero, with a mean randomness only $0.002$ when $N=8$. As $N$ further decreases to $1$ (Poisson encoding), almost no pixels exhibit randomness close to zero, while most have randomness greater than $0.15$. Consequently, the mean randomness across all pixels increases to $0.182$, which is approximately 90 times that of $N=8$.

Second, the encoded clean image exhibit more noticeable noise as $N$ decreases. This is more clear in the first group of examples. The object stays clearly recognizable under direct encoding and $N=8$, is progressively corrupted as $N$ is reduced, and is barely identifiable at $N=1$, i.e., Poisson encoding. The noise caused by the encoding is most apparent in flat, low-texture regions such as the background. For example, when comparing the encoded clean images at $N=8$ and $N=2$, the noise in the background introduced by encoding becomes visibly stronger at $N=2$. This phenomenon is consistent with our theoretical analysis. A smaller $N$ introduces larger rounding noise due to stronger randomness. Such residual noise lowers the fidelity of the encoded clean input, which may explain the decrease in clean accuracy when $N$ is small.

Third, decreasing $N$ does not necessarily reduce the magnitude of the clean--adversarial difference, but instead changes its spatial distribution. Since stochastic quantization is unbiased, the expected difference between the encoded clean and adversarial images equals the input perturbation itself, making it largely independent of $N$. Thus, the mean intensity of the difference maps is approximately preserved across different encoders. However, when $N$ is large, the differences are small in magnitude but densely distributed across the image. For example, under direct encoding, the difference map in the second group is activated over almost the entire image. In contrast, with a smaller $N$, the differences become much sparser, with more pixels remaining unchanged and appearing black in the difference maps, as observed for quantization encoding with $N=2$ and Poisson encoding.

However, decreasing $N$ also tends to reduce the observable difference between the encoded clean and adversarial images, as measured by the intensity of the difference maps. When $N$ is large, the differences between the encoded clean and adversarial images have relatively small magnitudes but are densely distributed across the image. For example, in the first group of examples, the difference map after direct encoding is activated over nearly the entire image region. In contrast, as $N$ decreases, the differences become increasingly sparse, with more regions appearing black in the difference maps. This observation suggests that stronger encoding randomness may weaken the effect of adversarial perturbations in the encoded representation, thereby partially explaining the improved adversarial robustness.

\end{document}